\def\eqref#1{equation~\ref{#1}}
\def\1{\bm{1}}
\DeclareMathAlphabet{\mathsfit}{\encodingdefault}{\sfdefault}{m}{sl}
\SetMathAlphabet{\mathsfit}{bold}{\encodingdefault}{\sfdefault}{bx}{n}
\DeclareMathOperator*{\argmin}{arg\,min}
\let\classAND\AND
\let\AND\relax
\let\AND\classAND
\newcommand{\added}[1]{#1}
\newcommand{\removed}[1]{}
\newcommand{\changed}[1]{#1}
\newtheorem{theorem}{Theorem}
\newtheorem{assumption}{Assumption}
\newtheorem{corollary}{Corollary}
\title{Fast and \changed{Effective} Weight Update for Pruned Large Language Models}
\author{\name Vladimír Boža \email boza@fmph.uniba.sk \\
      \addr Faculty of Mathematics, Physics and Informatics, Comenius University, Bratislava, Slovakia
}
\begin{document}

\maketitle

\begin{abstract}
Pruning large language models (LLMs) is a challenging task due to their enormous size. The primary difficulty is fine-tuning the model after pruning, which is needed to recover the lost performance caused by dropping weights. Recent approaches have either ignored fine-tuning entirely, focusing on efficient pruning criteria, or attempted layer-wise weight updates, preserving the behavior of each layer. However, even layer-wise weight updates can be costly for LLMs, and previous works have resorted to various approximations. 

In our paper, we propose a fast and \changed{effective} weight update algorithm for pruned layers based on the Alternating Direction Method of Multipliers (ADMM). We further extend it with a simple gradual pruning mask selection and achieve state-of-the-art pruning performance across a wide range of LLMs.
The code is available at \url{https://github.com/fmfi-compbio/admm-pruning}.
\end{abstract}

\section{Introduction}

Large language models (LLMs) \citep{llm,opt,LLaMA,LLaMA2} have displayed impressive performance in different tasks, but deploying them can be challenging due to their large size and high memory demands. 
In this work, we introduce a one-shot pruning and weight update algorithm for LLMs that is both fast and effective. Our algorithm produces state-of-the-art results for LLM pruning while imposing minimal computational overhead (\cref{tab:LLaMAperp}).


Neural networks are usually compressed by either quantization or weight pruning. 
LLM quantization \citep{quant1,quant2,quant3,quant4} compresses LLMs by storing weights using a small number of bits. 
On the other hand, pruning compresses models by dropping irrelevant weights \citep{prune1,prune2,cubicprune}.  
Pruning can be helpful for LLMs since, during inference, the main bottleneck is memory bandwidth for loading weights to processing unit \citep{flashllm}. 
However, the main challenge in deploying LLM pruning is that the network needs to be fine-tuned \citep{prunef1,prunef2}, which is not feasible with LLMs due to extensive computational and memory footprint. \added{For example, \citet{agarwalla2024enabling} needed retraining on 45 - 100 billion tokens to recover lost performance by pruning.
Also, memory-efficient fine-tuning like LoRA \citep{hu2021lora} is not applicable for LLM weight pruning since we cannot easily merge the low-rank update with the sparsified matrix.
}

\begin{table}

\caption{WikiText perplexity of pruned LLaMA-7B. Our ADMM-based methods are superior to previous ones.}
\begin{center}
\begin{tabular}{c c c}

Method & Sparsity & Perplexity \\\hline
Dense  &  0 \%    & 5.68       \\\hline

Wanda  &  50 \%   & 7.26           \\
SparseGPT  &  50 \%   & 7.22           \\
\rowcolor[HTML]{F3F3F3} ADMM1  &  50 \%   &  7.20          \\
\rowcolor[HTML]{E8E8E8} ADMM-Grad  &  50 \%   &  \bf{7.06}          \\\hline

Wanda  &  60 \%   & 10.66           \\
SparseGPT  &  60 \%   &  10.51          \\
\rowcolor[HTML]{f3f3f3} ADMM1  &  60 \%   &  9.96          \\
\rowcolor[HTML]{E8E8E8} ADMM-Grad  &  60 \%   &  \bf{9.22}          \\\hline

Wanda  &  70 \%   &  85.77          \\
SparseGPT  &  70 \%   &  26.30         \\
\rowcolor[HTML]{f3f3f3} ADMM1  &  70 \%   &  26.31          \\
\rowcolor[HTML]{E8E8E8} ADMM-Grad  &  70 \%   &  \bf{18.66}          \\\hline

Wanda  &  80 \%   &  5e3          \\
SparseGPT  &  80 \%   & 154.75           \\
\rowcolor[HTML]{f3f3f3} ADMM1  &  80 \%   & 202.04           \\
\rowcolor[HTML]{E8E8E8} ADMM-Grad  &  80 \%   & \bf{69.46}           \\\hline

Wanda  &  2:4  &  11.53          \\
SparseGPT  &  2:4   &  11.00          \\
\rowcolor[HTML]{f3f3f3} ADMM1  &  2:4   &  10.38          \\
\rowcolor[HTML]{E8E8E8} ADMM-Grad  &  2:4   & \bf{9.90}           \\\hline

\end{tabular}
\end{center}
\label{tab:LLaMAperp}
\end{table}

A feasible alternative is one-shot pruning, where one is given a trained model with a small amount of calibration data, and has to compress the model in a single forward pass using limited computational resources.
This is typically done via layer-wise pruning, where the pruning problem is split into layer-wise subproblems. In each layer, one aims to select a pruning mask and update weights to minimize reconstruction error.
Adaprune \citep{adaprune} solves layer-wise reconstruction by updating weights directly via gradient descent (using Adam optimizer). However, it needs many iterations to achieve convergence. 
Optimal brain compression (OBC) \citep{obc} removes weights one by one. In each step, it calculates the optimal weight to remove and also the optimal update.  
However, this approach is also very time-consuming for pruning LLMs.
The first practical approach applicable to LLMs was SparseGPT \citep{sparsegpt} using approximations on top of the OBC approach to make the problem feasible, albeit at the cost of decreased reconstruction quality. 

Recently, Wanda \citep{wanda} showed that LLMs can \added{be} pruned by removing weights with the smallest product of weight magnitude and corresponding input activation norm. This selection approach without the weight update is competitive with SparseGPT on lower sparsities (up to 60\%).

{\bf Our results.} In this paper, we introduce an efficient layer-wise weight update algorithm based on alternating direction method of multipliers (ADMM) \citep{admm}. Our algorithm sidesteps all of the problems of previous solutions. \added{We do not need many gradient descent iterations, nor do we need any heuristic approximation for calculating the weight update.} 
We only need a single inversion of a matrix similar in size to the weight matrix and very few simple iterations to achieve \changed{accurate} weight updates for a given pruning mask. 
Furthermore, we extend our algorithm with gradual pruning \citep{cubicprune}, where in each step, we prune more and more weights. This simple extension allows us to obtain state-of-the-art pruning results at a very low additional cost.

\section{Preliminaries}

\added{
\subsection{Large language models and transformers}

Large language models (like Llama) use transformer \citep{vaswani2017attention} architecture and are trained to predict the next word in the text. Transformer consists of multiple repeating blocks. Each block has multihead attention and a feed-forward subblock, which contain multiple linear transformations. Our work focuses on pruning weights in these linear transformations. 
}

\subsection{One-shot and layer-wise pruning}

We consider a scenario of post-training pruning, where we prune an already trained model to a desired sparsity (we assume that the sparsity is the same in each pruned layer).

Since manipulating the whole LLM at once leads to huge computational and memory requirements, we follow the works of \citet{adaprune,obc,sparsegpt}. We prune \added{the} LLM during one forward pass (one-shot pruning) and split pruning into multiple layer-wise subproblems. During the forward pass, we capture the calibration inputs for each layer and then prune and update each layer accordingly.
\added{More specifically, for each block in the model, we run a forward pass through it, capture inputs for each layer, prune and update weights, and then rerun a forward pass through the whole block to get outputs after pruning.}

We are given the original weights $W_\ell$ for each layer and calibration inputs $X_\ell$.
Our goal is to find a binary weight mask $M_\ell$ and updated weights $\widehat{W}_\ell$
such that the following reconstruction error is minimized:

\begin{equation}
\nonumber
\begin{aligned}  
||X_\ell W_\ell - X_\ell (M_\ell \odot \widehat{W}_\ell)   ||^2_2
\end{aligned}
\end{equation}

For now, we assume that pruning mask $M_\ell$ was found via a separate method and focus only on finding
updated weights $\widehat{W}_\ell$. 

Assuming that our layer has $n$ output neurons and $m$ inputs, one can just solve $n$ independent linear regressions to solve the problem optimally. Since the mask for each output is different, each one of $n$ outputs requires a separate matrix inversion of the relevant submatrix of $X_\ell^TX_\ell$, which in total takes $O(m^3n)$ time.
This is infeasible even for small neural networks.
It is possible to use various approximations to compute updates faster, as \changed{done in SparseGPT \citep{sparsegpt}}. However, we demonstrate in our experiments that this compromises the quality of the solution.
\added{Another approximation is to not update weights and prune weights with the lowest product of magnitude and input activation norm, as done in Wanda \citep{wanda}.}

Another possible solution is to update weights iteratively via gradient descent as in Adaprune \citep{adaprune}.
Here, one update step is proportional to $X_\ell^TX_\ell (M_\ell \odot \widehat{W}_\ell - W_\ell)$. Assuming $X_\ell^TX_\ell$ is precomputed, one update step takes $O(m^2n)$ time. While this looks much better than solving $n$ linear regressions, \citet{sparsegpt} as well as our own experiments show that one needs many iterations to achieve reasonable convergence.

\subsection{Alternating Direction Method of Multipliers}

Alternating direction method of multipliers (ADMM) \citep{admm} is an optimization method for solving problems in the form:

\begin{equation}
\nonumber
\begin{aligned}   
\text{minimize} & \quad f(x) + g(z) \\
\text{subject to} & \quad Ax + Bz = c
\end{aligned}
\end{equation}

where $f(x)$ and $g(x)$ are convex functions. 

\added{ADMM forms the augmented Lagrangian with dual variables $y$ and penalty factor $\rho$:

$$L_\rho(x,z,u) = f(x) + g(z) + y^T(Ax + Bz + c) - \frac{\rho}{2}||Ax + Bz - c||_2^2 $$

Typically ADMM is given using scale dual variable $u = y/\rho$ in a form:}

$$L_\rho(x,z,u) = f(x) + g(z) + \frac{\rho}{2}(||Ax + Bz - c + u||_2^2 - ||u||_2^2) $$

The Lagrangian is then optimized via the following iterations:

\begin{equation}
\nonumber
\begin{aligned}   
x^{k+1} &= \argmin_x f(x) + (\rho/2) || Ax + Bz^k - c + u^k ||_2^2 \\
z^{k+1} &= \argmin_{z} g(z) + (\rho/2) || Ax^{k+1} + Bz - c + u^k ||_2^2 \\
u^{k+1} &= u^k + Ax^{k+1} + Bz^{k+1} - c \\
\end{aligned}
\end{equation}

It can shown that ADMM converges to the optimal solution when $f$ and $g$ are convex and some other mild assumptions hold \cite{admm}.

\added{
\begin{assumption}
The (extended-real-valued) functions $f: \mathbb{R}^n \rightarrow \mathbb{R} \cup {+\infty}$ and $g: \mathbb{R}^m \rightarrow \mathbb{R} \cup {+\infty}$ are closed, proper, and convex.
\end{assumption}

\begin{assumption}
The unaugmented Lagrangian $L_0$ has a saddle point, i.e. there exists $(x^*,z^*,y^*)$ where for all $x, z, y$:
$L_0(x^*,z^*,y) \leq L_0(x^*,z^*,y^*) \leq L_0(x,z,y^*)$
\end{assumption}

\begin{theorem}
Let Assumptions 1 and 2 hold. Then:
\begin{itemize}
\item $Ax^k + Bz^k + c \rightarrow 0$ as $k \rightarrow \infty$, i.e. iterates approach feasibility.
\item $f(x^k) + g(z^k)$ approach optimal value as $k \rightarrow \infty$
\end{itemize}
\end{theorem}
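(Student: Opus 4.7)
The plan is to follow the classical Lyapunov argument of Boyd et al.\ for ADMM convergence. Let $p^\star = f(x^\star) + g(z^\star)$ be the optimal value, let $r^{k+1} = Ax^{k+1} + Bz^{k+1} - c$ be the primal residual, and let $y^{k+1} = \rho u^{k+1}$ be the (unscaled) dual iterate. Define the candidate Lyapunov function
\begin{equation*}
V^k \;=\; \tfrac{1}{\rho}\,\|y^k - y^\star\|_2^2 \;+\; \rho\,\|B(z^k - z^\star)\|_2^2.
\end{equation*}
The entire proof reduces to showing that $V^k$ is monotonically nonincreasing with a summable decrement, and then translating this into the two claimed limits.

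First I would derive two inequalities from the subproblem optimality conditions. Since $x^{k+1}$ minimizes $f(x) + (\rho/2)\|Ax + Bz^k - c + u^k\|_2^2$, convexity of $f$ and a standard subgradient argument yield
\begin{equation*}
f(x^{k+1}) - f(x^\star) \;\le\; -(y^{k+1})^{\!\top} A(x^{k+1} - x^\star) + \rho (B(z^{k+1} - z^k))^{\!\top} A(x^{k+1} - x^\star),
\end{equation*}
and similarly the $z$-update gives
\begin{equation*}
g(z^{k+1}) - g(z^\star) \;\le\; -(y^{k+1})^{\!\top} B(z^{k+1} - z^\star).
\end{equation*}
Adding these and using the saddle-point inequality $p^\star \le f(x^{k+1}) + g(z^{k+1}) + (y^\star)^{\!\top} r^{k+1}$ from Assumption 2 produces a clean bound of the form
\begin{equation*}
2(y^{k+1} - y^\star)^{\!\top} r^{k+1} - 2\rho (B(z^{k+1} - z^k))^{\!\top} r^{k+1} + 2\rho (B(z^{k+1} - z^k))^{\!\top} B(z^{k+1} - z^\star) \;\le\; 0.
\end{equation*}

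Next I would substitute $y^{k+1} - y^k = \rho r^{k+1}$ (the dual update) and complete the square. After algebraic rearrangement the above inequality becomes exactly
\begin{equation*}
V^k - V^{k+1} \;\ge\; \rho\,\|r^{k+1}\|_2^2 \;+\; \rho\,\|B(z^{k+1} - z^k)\|_2^2.
\end{equation*}
This is the key step and is where I expect the main obstacle, since it requires carefully tracking the cross terms in the completion of squares and invoking convexity of both $f$ and $g$ in the right direction; this is the classical calculation but it is easy to get a sign wrong. Once it is in hand, monotonicity of $V^k$ (hence boundedness) and telescoping immediately give $\sum_{k=0}^\infty \|r^{k+1}\|_2^2 < \infty$ and $\sum_{k=0}^\infty \|B(z^{k+1} - z^k)\|_2^2 < \infty$, so $r^k \to 0$ and $B(z^{k+1} - z^k) \to 0$.

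Finally, I would establish the objective-value claim. On the one hand, the two subproblem inequalities above sum to
\begin{equation*}
f(x^{k+1}) + g(z^{k+1}) - p^\star \;\le\; -(y^{k+1})^{\!\top} r^{k+1} + \rho (B(z^{k+1} - z^k))^{\!\top} A(x^{k+1} - x^\star),
\end{equation*}
whose right-hand side tends to $0$ since $r^{k+1} \to 0$, $B(z^{k+1} - z^k) \to 0$, and $y^{k+1}$ is bounded (as a consequence of $V^k$ being bounded). On the other hand, the saddle-point inequality gives $p^\star - (f(x^{k+1}) + g(z^{k+1})) \le (y^\star)^{\!\top} r^{k+1} \to 0$. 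Combining the two bounds yields $f(x^k) + g(z^k) \to p^\star$, completing both conclusions of the theorem.
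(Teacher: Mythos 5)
The paper never proves this theorem itself: it is stated as a known convergence result for ADMM and delegated to the cited reference of Boyd et al., the paper's own argument being confined to the Corollary, where Assumptions 1 and 2 are verified for the layer-wise weight-update problem. So there is no in-paper proof to compare against; what your proposal does is reconstruct the standard Lyapunov-function argument from that reference, and it does so essentially correctly: the two subgradient inequalities from the $x$- and $z$-subproblems, the saddle-point inequality supplied by Assumption 2, the Lyapunov function $V^k = \tfrac{1}{\rho}\|y^k-y^\star\|_2^2 + \rho\|B(z^k-z^\star)\|_2^2$, telescoping of its decrement, and the final two-sided bound on $f(x^{k+1})+g(z^{k+1})-p^\star$ are exactly the ingredients of the classical proof.

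One step is stated more optimistically than it deserves. Substituting $y^{k+1}=y^k+\rho r^{k+1}$ and completing squares in your combined inequality yields, by algebra alone, only $V^k - V^{k+1} \ge \rho\,\|r^{k+1} - B(z^{k+1}-z^k)\|_2^2$, and this weaker bound does not by itself give the two separate summable sequences you need. To reach $V^k - V^{k+1} \ge \rho\|r^{k+1}\|_2^2 + \rho\|B(z^{k+1}-z^k)\|_2^2$ you must additionally show that the cross term $-2\rho\,(r^{k+1})^{\top}B(z^{k+1}-z^k)$ is nonnegative; this comes from comparing the $z$-subproblem optimality conditions at consecutive iterations, namely $-B^{\top}y^{k+1}\in\partial g(z^{k+1})$ and $-B^{\top}y^{k}\in\partial g(z^{k})$, so monotonicity of $\partial g$ gives $(y^{k+1}-y^{k})^{\top}B(z^{k+1}-z^{k})\le 0$, i.e. $\rho\,(r^{k+1})^{\top}B(z^{k+1}-z^{k})\le 0$. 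You gesture at this (``invoking convexity of both $f$ and $g$''), but it is a separate subgradient argument, not part of the completion of squares, and without it the key inequality does not follow. A second, minor point: in the final limit you use the term $\rho\,(B(z^{k+1}-z^k))^{\top}A(x^{k+1}-x^\star)$, so you also need $A(x^{k+1}-x^\star)=r^{k+1}-B(z^{k+1}-z^\star)$ to be bounded; this follows from boundedness of $V^k$, the same fact you invoke for $y^{k+1}$. With these two points made explicit, your argument is complete and coincides with the proof in the source the paper cites.
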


}

One application of ADMM is solving constrained optimization over convex set $C$, i.e.:

\begin{equation}
\nonumber
\begin{aligned}   
\text{minimize} & \quad f(x) \\
\text{subject to} & \quad x \in C
\end{aligned}
\end{equation}

This problem can be rewritten into ADMM form using indicator function $g$, where $g(z) = 0$ if $z \in C$, and $g(z)~=~\infty$ otherwise:

\begin{equation}
\begin{aligned}   
\text{minimize} & \quad f(x) + g(z) \\
\text{subject to} & \quad x - z = 0
\end{aligned}
\label{eq:admmf}
\end{equation}

In this case, the ADMM update becomes:

\begin{equation}
\begin{aligned}   
x^{k+1} &= \argmin_x f(x) + (\rho/2) || x - z^k + u^k ||_2^2 \\
z^{k+1} &= \Pi_C (x^{k+1} + u^k) \\
u^{k+1} &= u^k + x^{k+1} - z^{k+1} \\
\end{aligned}
\label{eq:admms}
\end{equation}

Here, $\Pi_C$ is an Euclidean projection operation onto set $C$. 
Also, note that the $x$ update is just the original unconstrained problem with a simple quadratic penalty term.

\section{Methods}


Here, we propose an alternative solution to finding updated weights in the layer-wise pruning problem.
Our solution will have same iteration complexity as gradient descent, but will converge much faster.
Recall, that we \added{are given set of calibration inputs $X_\ell$ and mask $M_\ell$ and} are looking for $\widehat{W}_\ell$, such that reconstruction error $||X_\ell W_\ell - X_\ell (M_\ell \odot \widehat{W}_\ell)   ||^2_2$ is minimized.

We observe that when a set of zeroed weights is fixed, valid weight matrices form a convex set $C$. 
In other words, we are solving the following constrained optimization problem (we omit $\ell$ subscript for clarity):

\begin{equation}
\nonumber
\begin{aligned}   
\min_{\widehat{W}} & \quad ||X W - X \widehat{W}   ||^2_2 \\
\text{subject to} & \quad \widehat{W} \odot (1 - M) = 0
\end{aligned}
\end{equation}


\removed{Note that} Our objective is also convex and thus we can use ADMM to solve our optimization problem.
We denote our objective as $f(\widehat{W}) = ||X W - X \widehat{W} ||^2_2$ and 
we will use indicator function $g(Z)$ which takes value $0$ if $Z \odot (1-M) = 0$ and $\infty$ otherwise.
Using formulation (\ref{eq:admmf}) and updates (\ref{eq:admms}), we get following iterations:

\begin{equation}
\begin{aligned}   
\widehat{W}^{k+1} &= \argmin_{\widehat{W}} f(\widehat{W}) + (\rho/2) || \widehat{W} - Z^k + U^k ||_2^2 \\
Z^{k+1} &= \Pi_C(\widehat{W}^{k+1} + U^k) \\ 
U^{k+1} &= U^k + \widehat{W}^{k+1} - Z^{k+1} \\
\end{aligned}
\label{eq:admmmask}
\end{equation}

In our case, $Z$-update is just a projection onto the set of valid matrices, thus:
\begin{equation}
\begin{aligned}   
Z^{k+1} = (\widehat{W}^{k+1} + U^k) \odot M
\end{aligned}
\label{eq:zupd}
\end{equation}

Updating $\widehat{W}$ is very similar to ridge regression and can be computed as:

\begin{equation}
\begin{aligned}   
\widehat{W}^{k+1} = (X^TX + \rho I )^{-1} (X^TXW + \rho(Z^k-U^k))
\end{aligned}
\label{eq:wupd}
\end{equation}

\bigskip

\begin{corollary}
For a fixed calibration input $X$ and mask $M$
iterates \ref{eq:admmmask} (with updates \ref{eq:zupd}, \ref{eq:wupd}) converge to optimal solution for weight update.
\end{corollary}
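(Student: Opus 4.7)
The plan is to apply Theorem~1 to the ADMM reformulation in (\ref{eq:admmf}) with $f(\widehat{W}) = ||XW - X\widehat{W}||_2^2$ and $g$ the indicator function of the subspace $C = \{Z : Z \odot (1-M) = 0\}$. With these choices, the generic updates (\ref{eq:admms}) specialize exactly to the iterates (\ref{eq:admmmask}) together with (\ref{eq:zupd}) and (\ref{eq:wupd}), so convergence reduces to verifying Assumptions~1 and~2 for this pair $(f,g)$.

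Assumption~1 is immediate. The function $f$ is a real-valued convex quadratic in $\widehat{W}$, hence closed, proper, and convex. The set $C$ is a non-empty closed linear subspace of the ambient matrix space (it contains the zero matrix and is defined by linear equalities), so its indicator $g$ is closed, proper, and convex as well.

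The main obstacle is Assumption~2: existence of a saddle point of the unaugmented Lagrangian $L_0$. The primal problem amounts to minimizing the convex quadratic $f$ over the affine set $\{\widehat{W} = Z,\ Z \in C\}$. Since $f \geq 0$ on $C$ and its restriction to $C$ is a linear least-squares objective, a minimizer exists; let $\widehat{W}^* = Z^* \in C$ attain the optimum (if $X^\top X$ is rank-deficient on $C$, one simply picks any solution of the associated normal equations restricted to $C$). Because every constraint in (\ref{eq:admmf}) is affine, Slater's condition is trivially met and strong duality holds for this convex program, producing a dual optimum $y^*$. The triple $(\widehat{W}^*, Z^*, y^*)$ is then a saddle point of $L_0$, verifying Assumption~2.

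With both assumptions in hand, Theorem~1 yields $\widehat{W}^k - Z^k \to 0$ and $f(\widehat{W}^k) + g(Z^k) \to p^*$, where $p^*$ is the optimal value of (\ref{eq:admmf}). The update (\ref{eq:zupd}) guarantees $Z^k \in C$ for every $k \geq 1$, so $g(Z^k) = 0$ identically, and together with asymptotic feasibility this gives $f(\widehat{W}^k) \to p^*$ with $\widehat{W}^k$ becoming mask-compatible in the limit --- that is, the iterates achieve the optimal reconstruction error over the set of mask-compatible weight matrices, which is the content of the corollary.
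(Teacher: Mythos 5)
Your proposal is correct and follows the same overall skeleton as the paper -- reduce to Theorem~1 by checking Assumptions~1 and~2 for $f(\widehat{W}) = \|XW - X\widehat{W}\|_2^2$ and the indicator $g$ of the mask subspace $C$ -- but it verifies Assumption~2 by a different argument. The paper constructs the saddle point explicitly: it takes a primal optimum $\widehat{W}^* = Z^*$ (independent linear regressions) and then writes down the dual certificate by hand, setting $y^*_{ij} = 0$ on unmasked entries and $y^*_{ij} = -\tfrac{\partial f}{\partial \widehat{W}_{ij}}(\widehat{W}^*)$ on masked ones, and checks stationarity of $L_0(\cdot,\cdot,y^*)$ directly. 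You instead argue primal attainment (least squares restricted to a subspace always attains its minimum, even when $X^\top X$ is rank-deficient) and then invoke strong duality with dual attainment for a convex program with affine constraints to produce $y^*$ abstractly. Both are valid; the paper's route is more elementary and self-contained, and the explicit $y^*$ makes it transparent why masked coordinates carry a nonzero multiplier, while your route is shorter and handles the degenerate (rank-deficient) case without extra words. One small imprecision on your side: with an extended-valued $g$ in the objective, the relevant condition is the refined Slater condition requiring a feasible point in the relative interior of the domain of the objective, not Slater's condition verbatim; here this is harmless because $C$ is a linear subspace, so $\mathrm{ri}(\mathbb{R}^{m\times n}\times C) = \mathbb{R}^{m\times n}\times C$ and any feasible point qualifies, but it is worth stating since that is exactly where the indicator function could otherwise cause trouble.
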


\begin{proof}
Since our algorithm uses ADMM iterations, we only need to prove that assumptions 1 and 2 hold.
$f$ and $g$ are clearly closed, proper, and convex functions; thus, assumption 1 holds.

To show that assumption 2 holds, we need to prove that there exists $(\widehat{W}^*,Z^*,y^*)$ that for all $(\widehat{W},Z,y)$:
$L_0(\widehat{W}^*,Z^*,y) \leq L_0(\widehat{W}^*,Z^*,y^*) \leq L_0(\widehat{W},Z,y^*)$ where $L_0(\widehat{W},Z,y) = f(\widehat{W}) + g(Z) + y^T (\widehat{W} - Z)$.

There is a globally optimal solution $\widehat{W}^* = Z^*$ (can be found by independent linear regressions), where:
$L_0(\widehat{W}^*,Z^*,y) = f(\widehat{W}) + g(Z)$ and thus
$L_0(\widehat{W}^*,Z^*,y) \leq L_0(\widehat{W}^*,Z^*,y^*)$.

If $M_{ij} = 1$ ($\widehat{W}_{ij}$ is unmasked and can have any value), then we set $y_{ij}^* = 0$. 
If $M_{ij} = 0$, then $Z_{ij}^* = \widehat{W}_{ij}^* = 0$ and we set $y_{ij}^* = -\frac{\partial f}{\partial \widehat{W}_{ij}}(\widehat{W}^*)$.

Then all $\widehat{W}$ and $Z$ derivatives of $L_0$ are zero (or $Z^*_{ij}$ must be 0 due to masking) at $L_0(\widehat{W}^*,Z^*,y^*)$ and since $L_0$ is convex in $\widehat{W}$ and $Z$, then we have a global optimum for given $y^*$ and thus $L_0(\widehat{W}^*,Z^*,y^*) \leq L_0(\widehat{W},Z,y^*)$. And thus, assumption 2 holds.
\end{proof}

We can precompute and cache $(X^TX + \rho I )^{-1}$ and $X^TXW$, and then one update iteration has $O(m^2n)$ complexity, which is the same as the complexity of gradient descent.
\added{Note that theoretical results do not say anything about the speed of convergence. In the experimental section, we will show that, in practice, we can get high-quality solutions after very few iterations.}

One can also view $\widehat{W}$ update as a way of pulling pruned weights towards zero. Note that for unpruned weights, the penalty term $(\rho/2) || \widehat{W} - (Z^k - U^k) ||_2^2$ only limits the step size, but for pruned weights, the value of $Z^k - U^k$ will have different sign than the value of $W$ and thus they will be strongly pulled towards zero.

\subsection{Mask selection and preconditioning}

In the previous section, we described how to update weights when we are given the sparsity mask.
Now, we will discuss how to select the mask for pruning.

Wanda \citep{wanda} is a simple rule to select a high-quality mask for pruning LLMs.
Instead of selecting weights with the largest value (magnitude pruning), they select weights with the highest product of weight absolute value and input neuron norm, i.e. $|W_{ij}| \cdot ||X_j||_2$.
In our implementation, we follow this selection rule, but we use the norm of inputs as preconditioning.
We multiply the weight matrix by feature norms and divide calibration inputs by their feature norms, run the ADMM algorithm, and then normalize the weight matrix back. 
Note that after the preconditioning, selecting the mask by weight magnitude is equivalent to the Wanda algorithm and that the diagonal of $X^TX$ contains only ones.

Wanda paper also suggests keeping a constant number of weights per output. We found that in our case with weight update, this constraint is actually slightly detrimental, and in our work, we select the top weights for the whole layer.

\subsection{Gradual pruning}

Until now, we considered a scenario where one selects the pruning mask first and then updates weights. 
Here, we propose a simple extension to our algorithm, which progressively prunes more and more weights and simultaneously computes the weight update. Note, that this still happens during one forward pass, we will just apply multiple iterations to one layer-wise problem.

We adopt cubic sparsity schedule from \citep{cubicprune}, where sparsity at step $t$ is computed as 
$s_t = s_f \left(\frac{t}{k_s}\right)^3$, where $s_f$ is final sparsity and $k_s$ is the number of sparsification steps. 
In each step, we set $s_t$ weights to zero and then proceed with the ADMM update. Note, that only overhead of gradual pruning is just a mask selection added into each step.
While $Z^k$ represents the current valid solution, we found that it is slightly better to use $W^{k+1}~+~U^k$ for selecting weights to prune. This is actually the optimal choice if our constraint (function $g$) was a specific sparsity, not a predefined mask.
We summarize our pruning algorithm in \cref{alg:pruning}.

We also extend gradual pruning to structured 2:4 sparsity using the following straightforward idea. Our final sparsify will be $s_f = 0.5$. If in step $t$ our target sparsity is $s_t$, then we always keep the two highest elements from each group of four and then prune $2s_t$ weights from the remaining ones.

\begin{algorithm}[th!]
   \caption{Layerwise gradual pruning with ADMM. Given weight matrix $W$, 
   calibration input $X$, desired sparsity $s_f$, number of iterations $k$, number of sparsification steps $k_s$, dampening factor $\lambda$ (usually $0.1$) and penalty factor $\rho$ (usually $1$),
   we prune matrix $W$ to desired sparsity and \changed{accuratelly} update weights for the given weight mask.}
   \label{alg:pruning}

\begin{algorithmic}
\STATE $norm \leftarrow ||X||_2 + \epsilon$
\STATE $W \leftarrow W * norm$
\STATE $X \leftarrow X / norm$
\STATE $XX \leftarrow X^TX + \lambda I$  ~~~~// $\lambda$ is usually 0.1
\STATE $XXW \leftarrow XX \cdot W$  ~~~~// precomputation
\STATE $XX^{-1} = (XX + \rho I)^{-1}$
\FOR {$step = 1..k$}
  \IF {$step <= k_s$}
  \STATE $s_i = s_f \left(\frac{i}{k_s}\right)^3$
  \STATE $M \leftarrow \texttt{mask lowest } s_i \texttt{ indices}$ \\
  $\texttt{~~~~from } |W+~U|$
  \ENDIF
  \STATE $Z \leftarrow (W+U) * M$
  \STATE $U \leftarrow U + (W - Z)$
  \STATE $W \leftarrow XX^{-1} (XXW + \rho (Z-U))$
\ENDFOR
\STATE $W \leftarrow (W+U) * M / norm$

\end{algorithmic}

\end{algorithm}

\added{
\subsection{Comparison with SparseGPT and Wanda}

Compared to SparseGPT \citep{sparsegpt}, our algorithm does a more accurate weight update since it does not rely on approximation (we also verify this later in the experimental section). It is difficult to say which mask selection algorithm is better in theory. We gradually prune the whole weight matrix while SparseGPT does optimal selection on group columns of the weight matrix iteratively. But in our experiments our mask selection leads to better results.

Our algorithm can also be thought of as Wanda \citep{wanda} with added weight updates and gradual pruning. 
}

\subsection{Note on using ADMM with $L_0$ penalty}

It is possible to use ADMM to optimize functions under $L_0$ constraint heuristically. This was previously done by \citet{zhang2018systematic,ye2019adversarial,gui2019model}. While some of the papers claim that this approach is "systematic", in reality, using ADMM with $L_0$ constraint is just a heuristic since the constraint is not convex. Moreover, in our preliminary experiments, we found that ADMM with $L_0$ constraint is very sensitive to the choice of $\rho$, and for some choices, it will actually run in cycles and not converge.

\section{Experiments}

{\bf General setup.} We implement our algorithms by extending the Wanda \citep{wanda} codebase, which relies on Pytorch and the Huggingface library. 
Similarly to Wanda \added{and SparseGPT}, we use 128 calibration samples from the C4 training dataset \citep{c4}.
We run pruning on a machine with two Quadro RTX 5000 GPUs (each with 16GB of GPU memory). Since we prune layers
sequentially in order, we need only to load one layer to GPU memory at one time. This allows us to prune 70B parameter LLaMA models using a relatively small GPU. \added{Unless stated otherwise, we prune for $k = 20$ iterations, using $k_s = 15$ sparsification steps, and set the dampening factor to $\lambda = 0.1$ and ADMM penalty factor $\rho = 1$.}

\added{We compare our methods to Wanda \citep{wanda}, which does not do weight update and just prunes weights with the lowest product of magnitude and activation norm, and SparseGPT \citep{sparsegpt}, which uses multiple approximations to select pruned weight and calculating weight updates. For both methods, we use their public implementation and default hyperparameter settings.}

{\bf Models and evaluation.} We test our methods on LLaMA \citep{LLaMA} and LLaMA2 \citep{LLaMA2} models. Similarly to previous works \citep{sparsegpt,wanda}, we measure the performance of pruned models on language modeling and zero-shot tasks. 
Our main focus is perplexity on held-out WikiText \citep{wikitext}, considered a goto metric for evaluating language model compression \citep{sparsegpt}.
As an additional verification and testing, we use the same seven tasks as Wanda uses from EleutherAI LM Harness \citep{zeroshot}.

\begin{figure}
    \centering
    \vspace*{-1cm}
    \includegraphics[width=0.65\columnwidth]{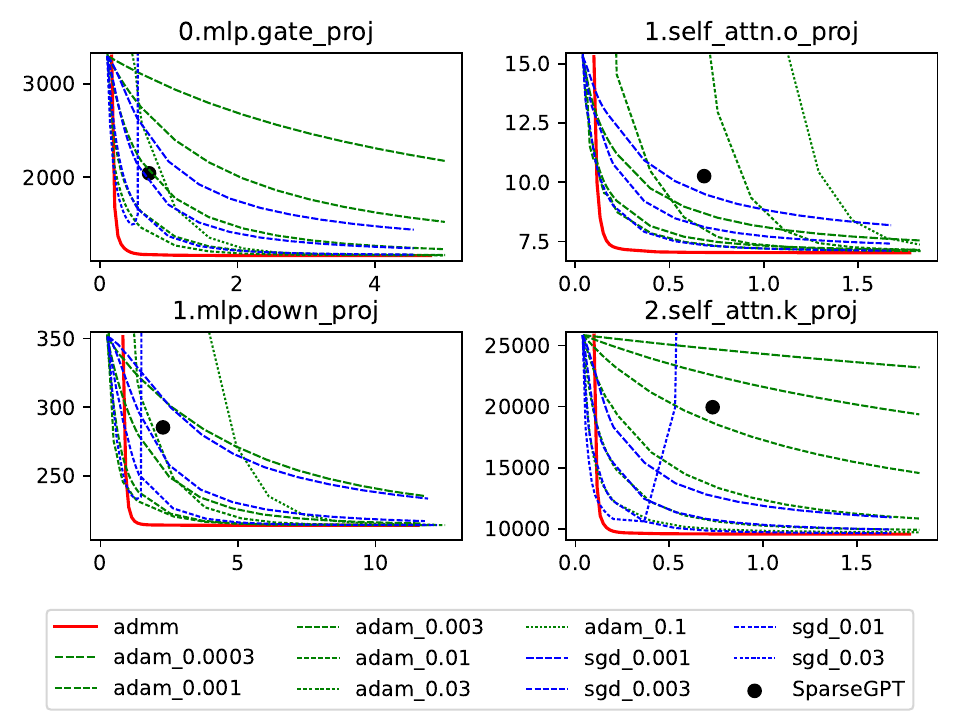}
    \caption{Reconstruction error over time (in seconds) during optimization of weights in selected layers of LLaMA-7B. The mask was derived by Wanda using 50\% sparsity. We compare our proposed ADMM algorithm to SGD with momentum and Adam using various learning rates. We also compare to the SparseGPT update. Our ADMM update converges much faster than other methods and is better than the SparseGPT update.}
    \label{fig:layerwise-error}
\end{figure}

\begin{figure}[h]
    \centering
    \vspace*{-1cm}
    \includegraphics[width=0.55\columnwidth]{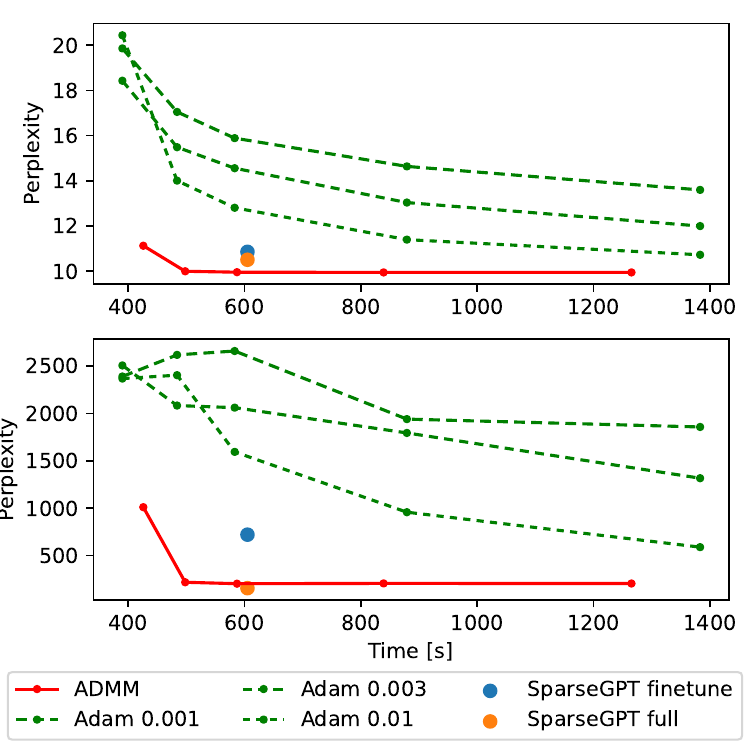}
    \caption{WikiText perplexity vs time overhead for ADMM, Adam, and SparseGPT weight update on LLaMA-7B.
    We run ADMM and Adam for 1, 10, 20, 50 and 100 update steps and test Adam with various learning rates. The top plot shows 60\% sparsity. The bottom one uses 80\% sparsity. SparseGPT full refers to normal SparseGPT, which also selects the pruning mask gradually. All other options just update weights over a fixed mask selected by Wanda. Our weight update is better than the one in SparseGPT and better than gradient-based methods.}
    \label{fig:full-error}
\end{figure}

\subsection{Reconstruction error convergence}
\label{sec:lconv}

As a first experiment, we study the quality of our update algorithm. We use a fixed sparsity mask derived using Wanda with 50\% sparsity and observe reconstruction error convergence in one layer. 
We compare our algorithm to gradient-based approaches using Adam and SGD optimizers with varying learning rates. We also compare it to the SparseGPT update (without mask selection) used in the Wanda paper. 



The results for selected layers of LLaMA-7b are presented in Figure 1.
Our ADMM-based algorithm is superior to both gradient-based algorithms and SparseGPT as it converges almost instantly after computing the initial $X^TX$ matrix inverse. We also note that ADMM works well with the default setting of $\rho=1$ and does not require learning rate tuning, which starkly contrasts with SGD and Adam, which have different optimal learning rates in different layers.

\subsection{Weight update quality comparison}

In this experiment, we first prune each layer of LLaMA-7B to 60\% or 80\% sparsity using Wanda mask selection and then update weights either using gradient-based (via Adam) or ADMM update. We select the pruning mask in a single step, i.e., we do not do any gradual mask selection. We test using 1, 10, 20, 50, and 100 update steps. We also test the performance of SparseGPT weight update and, for reference, include results of running SparseGPT with its own gradual mask selection.

We measure perplexity on Wikitext and time overhead (over forward pass) for each update option. 
Using just one update step, we can almost beat SparseGPT and all gradient-based algorithms (\cref{fig:full-error}). The ADMM update almost converges with ten update steps, while the gradient-based algorithms need more than 100 steps. 
ADMM is thus clearly a faster and superior weight update algorithm compared to the gradient-based update. 
Our algorithm also provides a better weight update than SparseGPT weight update, and at 60\% sparsity, it is even better than SparseGPT with its own iterative mask selection. 

Furthermore, we explicitly compare SparseGPT and ADMM weight updates over different weight masks.
We select either Wanda or SparseGPT mask and apply SparseGPT or ADMM weight update (in the case of SparseGPT mask, SparseGPT update is no-op, and for ADMM update, we rewind weights and keep the selected mask).
Results are summarized in \cref{tab:wupdcmp}. Our ADMM weight update is always better than SparseGPT update. \added{Note that, our mask selection is also better than SparseGPT one (9.22 vs 9.92 perplexity).}




\begin{table}
\centering
\caption{Comparision of weight update quality between ADMM and SparseGPT on Llama-7B using 60\% sparsity.}
\begin{tabular}{c|c|c}
Mask selection & Weight update & Perplexity \\\hline
Wanda          & SparseGPT     & 10.86 \\ 
Wanda          & ADMM          & 9.96  \\\hline
SparseGPT      & SparseGPT     & 10.51 \\
SparseGPT      & ADMM     & 9.92  \\
\end{tabular}
\label{tab:wupdcmp}
\end{table}

\subsection{Pruning LLaMA-7B}

\begin{table}

\caption{Total pruning time for Llama-7B}
\begin{center}
\begin{tabular}{c|c}
Method & Total seconds \\\hline
Wanda      & 245 \\
SparseGPT  & 850 \\
ADMM1      & 832 \\
ADMM-Grad & 869 \\
\end{tabular}
\end{center}
\label{tab:timing}
\end{table}

\begin{table*}[t]

\caption{Zero shot accuracies on various tasks during pruning of LLaMA-7B}
\begin{center}
\begin{tabular}{c c c c c c c c c c}

Sparsity & Method & BoolQ & RTE & HellaSwag & WinoGrande & ARC-e & ARC-c & OBQA & Mean \\\hline
0 \%& Dense & 75.05	& 66.43	& 56.92	& 69.93	& 75.34	& 41.89	& 34.40	& 59.99 \\\hline
\multirow{3}{2em}{\centering 50\%} & Wanda & 71.22 &	\bf{55.60}	& 51.85	& 66.06	& 69.11	& 36.86 &	28.80 &	54.21  \\
                                  & SparseGPT & 73.05	& 52.34	& 51.21	& 68.42	& 70.70 & 36.43 & 28.60 & 54.39 \\
\rowcolor[HTML]{eeeeee}\cellcolor{white} &  ADMM-Grad & \bf{73.63}	& 52.34	& \bf{52.33}	& \bf{69.13}	& \bf{70.74}	& \bf{37.88}	& \bf{30.20} & \bf{55.18} \\\hline
\multirow{3}{2em}{\centering 60\%} & Wanda & 69.26&	59.56&	43.76&	62.35&	62.58&	30.29&	25.20&	50.43  \\
                                  & SparseGPT & 70.7&	\bf{62.09}&	44.84&	65.58&	64.14&	30.97&	25.20&	51.93 \\
\rowcolor[HTML]{eeeeee}\cellcolor{white}                                  &  ADMM-Grad & \bf{72.41} &	58.84&	\bf{46.61}&	\bf{66.77} &\bf{64.52} &	\bf{31.65}&	\bf{26.20}&	\bf{52.43} \\\hline
\multirow{3}{2em}{\centering 70\%} & Wanda & 59.78&	58.12&	28.81&	50.82&	32.40&	18.85&	14.20&	37.57  \\
                                  & SparseGPT & 62.35&\bf{55.95}&33.77&59.35&45.70&23.97&17.20&42.61 \\
\rowcolor[HTML]{eeeeee}\cellcolor{white}                                  & ADMM-Grad & \bf{66.05}& 53.79& \bf{36.29}& \bf{59.74}& \bf{50.84}& \bf{25.50}& \bf{18.60}& \bf{44.40} \\\hline
\multirow{3}{2em}{\centering 80\%} & Wanda & 37.82& 48.37& 26.29& 48.77& 27.23& \bf{20.56}& 13.00& 31.72  \\
                                  & SparseGPT & 41.89& \bf{52.70}& 27.83& 48.38& 30.30& 18.77& \bf{13.40}& 33.32 \\
\rowcolor[HTML]{eeeeee}\cellcolor{white}                                  &  ADMM-Grad & \bf{56.14}& \bf{52.70}& \bf{28.75}& \bf{50.74}& \bf{31.56}& 18.94& 12.40& \bf{35.89} \\\hline
\multirow{3}{2em}{\centering 2:4} & Wanda & 69.3& 51.99& 42.06& 62.75& 60.94& 28.07& 24.60& 48.53  \\
                                  & SparseGPT & \bf{70.46}& \bf{60.65}& 42.99& 64.88& 61.49& 30.12& 23.60& 50.60 \\
\rowcolor[HTML]{eeeeee}\cellcolor{white}                                  & ADMM-Grad & 70.27& 55.59& \bf{44.88}& \bf{66.14}& \bf{64.18}& \bf{30.97}& \bf{25.20}& \bf{51.03} \\

\end{tabular}
\end{center}
\label{tab:LLaMAzero}
\end{table*}

Based on previous observations, we set the number of update iterations to 20, which should provide a pruning overhead similar to SparseGPT (\cref{tab:timing}) and also guarantee reasonable convergence of weight updates.
We compare our weight update after mask selection without gradual pruning (ADMM1), our gradual pruning algorithm, which computes the mask over 15 iterations (ADMM-Grad) with Wanda and SparseGPT pruning.
We prune LLaMA-7b to various sparsities and also with 2:4 structured sparsity.
First, we measure Wikitext perplexity (\cref{tab:LLaMAperp}). We see that our weight update over a fixed Wanda mask (ADMM1) produces better results than any other algorithm at 50\%, 60\%, and 2:4 sparsities. Note that SparseGPT generates the pruning mask iteratively, which gives it a slight edge in higher sparsities.
When selecting the mask gradually, we are superior to all previously developed algorithms, especially at higher sparsities.

Finally, we measure performance on seven zero-shot tasks (we use the same selection as the authors of Wanda):
BoolQ \citep{boolq}, RTE \citep{rte}, HellaSWAG \citep{hellaswag}, WinoGrande \citep{winogrande}, ARC easy and challenge \citep{arc}, and OpenbookQA \citep{obqa}. 

Our results (\cref{tab:LLaMAzero}) show that our algorithm is superior to the previous ones except for the RTE task. We note that results for the RTE task are slightly erratic (e.g. there is better performance at 60\% sparsity than at 50\%). We attribute this to the small RTE dataset size (277 samples).
Notably, we recover 30-40\% of the performance drop of SparseGPT on the BoolQ task at 50-70\% sparsities and also on WinoGrande task using 50-60\% sparsities. When using 2:4 sparsity, we recover 20-25\% of the performance drop on WinoGrande and ARC-e tasks.

\subsection{Pruning LLaMA-2 variants}

Our algorithm generalizes and scales to bigger LLMs. We test it on variants of LLaMA-2 at various sparsity levels. \cref{tab:LLaMA2} shows that our method is superior
to previous ones, except at 2:4 sparsity on LLaMA2-70B. 
We note quite a substantial improvement of our algorithm over previous ones at 60\% sparsity and also at 2:4 sparsity on 7B and 13B models.

\begin{table}

\caption{Perplexity of pruned LLaMA-2 variants on WikiText}
\begin{center}
\begin{tabular}{c c c c c}
Method & Sparsity & 7B & 13 B & 70B \\\hline
Dense  & 0 \%     & 5.12 & 4.57 & 3.12 \\\hline
Wanda      & 50 \%    & 6.42     & 5.56      & 3.98      \\
SparseGPT  & 50 \%    & 6.51     & 5.63     & 3.98     \\
\rowcolor[HTML]{eeeeee} ADMM-Grad  & 50 \%    & \bf{6.33}     & \bf{5.52}      & \bf{3.95}      \\\hline

Wanda      & 60 \%    & 9.71     &  7.75    &  4.98    \\
SparseGPT  & 60 \%    & 9.58     &  7.80    &  4.98    \\
\rowcolor[HTML]{eeeeee} ADMM-Grad  & 60 \%    & \bf{8.70} &  \bf{7.09}     & \bf{4.81}      \\\hline

Wanda      & 80 \%    & 5e3  & 2e3     &  1e2    \\
SparseGPT  & 80 \%    & 108.87     & 94.23     & 25.86      \\
\rowcolor[HTML]{eeeeee} ADMM-Grad  & 80 \%    & \bf{55.93} & \bf{43.58}     & \bf{18.84}     \\\hline

Wanda      & 2:4    & 11.02 & 8.27     & \bf{5.16}     \\
SparseGPT  & 2:4    & 10.17 &  8.32    &  5.40    \\
\rowcolor[HTML]{eeeeee} ADMM-Grad  & 2:4    & \bf{9.74} &  \bf{7.78}    & 5.19     \\\hline

\end{tabular}
\end{center}
\label{tab:LLaMA2}
\end{table}


\added{
\section{Related Work}

{\bf General neural network pruning.} Post-training network pruning aims to compress neural networks by removing some of their parts (weights, neurons, layers) \citep{prune1,prune2,prunef1,prunef2}.
Pruning criteria vary from simple magnitude pruning \citep{cubicprune} to sophisticated second-order approximations \citep{singh2020woodfisher}. Nowadays, there is also a focus on methods that use limited calibration data and do very little fine-tuning \citep{obc,adaprune}.

{\bf LLM pruning algorithms.} Due to sheer LLM size, weight pruning algorithms focused mainly on pruning with limited calibration data and fine-tuning. SparseGPT \citep{sparsegpt} solves layer-wise pruning problem using multiple approximations.
Wanda \citep{wanda} shows that a simple product of weight magnitude and input activation norm provides competition pruning criterion. DS$\oslash$T \citep{dsot} provides an iterative mask improvement algorithm.

Another possibility for LLM pruning is structured pruning. One can either remove individual neurons \citep{ma2023llm, ashkboos2024slicegpt}, or remove whole layers \citep{men2024shortgpt, gromov2024unreasonable}.

{\bf Target specific distillation and tuning.} One can also make neural networks smaller by using knowledge distillation \citep{hinton2015distilling}. In LLM context, this is usually done with a specific task in mind \citep{hsieh2023distilling,fu2023specializing,gu2023minillm,ko2024distillm}, where large general model knowledge (logits) is distilled into smaller task-specific model. This is in contrast with our method, which aims to preserve the general ability of the original LLM.

}

\section{Conclusions \added{and Future Work}}

In this work, we presented a simple, fast, and effective post-pruning weight update algorithm based on the alternating direction method of multipliers. We showed that our algorithm converges much faster than any previously available option. Our weight update method is also theoretically sound and does not rely on any heuristical decisions or approximations.

We further improved the pruning performance by doing gradual mask selection and weight updates. 
This achieves state-of-the-art performance in \added{the} layer-wise pruning setting, much better than previous solutions like Wanda or SparseGPT. 

Our main limitation is that our update rule runs over dense matrices, and thus, during update computation, we have no time or space savings from potential sparsity. We hope to address this in future work.

\added{Another limitation is that one-shot pruned large models are still inferior to smaller dense ones.}
The pruning results can certainly be improved by using nonuniform sparsity across layers \citep{owl}; for now, we leave this as a future work. Another option for improvement is to use a more accurate mask selection rule, such as one in Optimal brain surgeon \citep{obs}. 

Finally, our algorithm provides an efficient update rule for sparse matrices and can be used in some advanced optimizers like FOOF \citep{foof}.

\subsubsection*{Acknowledgments}

This research was supported by grant 1/0538/22 from Slovak research grant agency VEGA.

\bibliography{main}
\bibliographystyle{tmlr}


\end{document}